\let\proof\relax
\newtheorem{theorem}{\bf Theorem}[]
\newtheorem{lemma}{\bf Lemma}[]
\newtheorem{definition}{\bf Definition}[]
\title{\LARGE \bf
Observability-driven Assignment of Heterogeneous Sensors for Multi-Target Tracking
}
\author{Seyed Ali Rakhshan, Mehdi Golestani, and He Kong
\thanks{The authors’ work has been supported by the National Key R\&D Program of China under Grant No. 2024YFB4710902, the National Natural Science Foundation of China (NSFC) under Grant No. U24A20265, the Shenzhen Science and Technology Program under Grant No. KQTD20221101093557010, the Guangdong Science and Technology Program under Grant No. 2024B1212010002.}
\thanks{The authors are with the Guangdong Provincial Key Laboratory of Fully Actuated System Control Theory and Technology, School of Automation and Intelligent Manufacturing, Southern University of Science and Technology (SUSTech), Shenzhen 518055, China. Emails: rakhshan@sustech.edu.cn, golestani@sustech.edu.cn, kongh@sustech.edu.cn}%
}%
\begin{document}

\maketitle
\thispagestyle{empty}
\pagestyle{empty}

\begin{abstract}
This paper addresses the challenge of assigning heterogeneous sensors (i.e., robots with varying sensing capabilities) for multi-target tracking. We classify robots into two categories: (1) sufficient sensing robots, equipped with range and bearing sensors, capable of independently tracking targets, and (2) limited sensing robots, which are equipped with only range or bearing sensors and need to at least form a pair to collaboratively track a target. Our objective is to optimize tracking quality by minimizing uncertainty in target state estimation through efficient robot-to-target assignment. By leveraging matroid theory, we propose a greedy assignment algorithm that dynamically allocates robots to targets to maximize tracking quality. The algorithm guarantees constant-factor approximation bounds of \( 1/3 \) for arbitrary tracking quality functions and \( 1/2 \) for submodular functions, while maintaining polynomial-time complexity. Extensive simulations demonstrate the algorithm’s effectiveness in accurately estimating and tracking targets over extended periods. Furthermore, numerical results confirm that the algorithm's performance is close to that of the optimal assignment, highlighting its robustness and practical applicability.  
\end{abstract}

\section{INTRODUCTION}
Multi-robot systems enable a wide range of applications, including surveillance, search and rescue, and environmental monitoring \cite{r3,ISJ,r4,wang2024, wakulicz2021}. Their collective capabilities allow them to tackle complex, dangerous, or time-sensitive tasks effectively \cite{r12,su2021}. Among the key challenges in these systems is multi-target tracking, which requires robots to collaboratively estimate and follow dynamic targets' motion in real-time \cite{r15}. Crucial aspects of multi-target tracking include sensor placement and sensor-to-target assignment which have been discussed extensively in the existing literature \cite{TDOA, IOT, r8,m1,m2,m3}.

However, most existing works focus on the assignment problem with homogeneous sensors. Compared to homogeneous robot teams, heterogeneous teams that integrate robots with complementary sensing, mobility, and computational capabilities
can potentially enhance the entire system's performance across diverse environments \cite{r5,r9,r10,fu2024}. Nevertheless, these systems introduce challenges related to information exchange, decision-making, and task allocation, necessitating sophisticated coordination mechanisms.

\begin{figure}[h]
\centering
\includegraphics[width=0.48\textwidth]{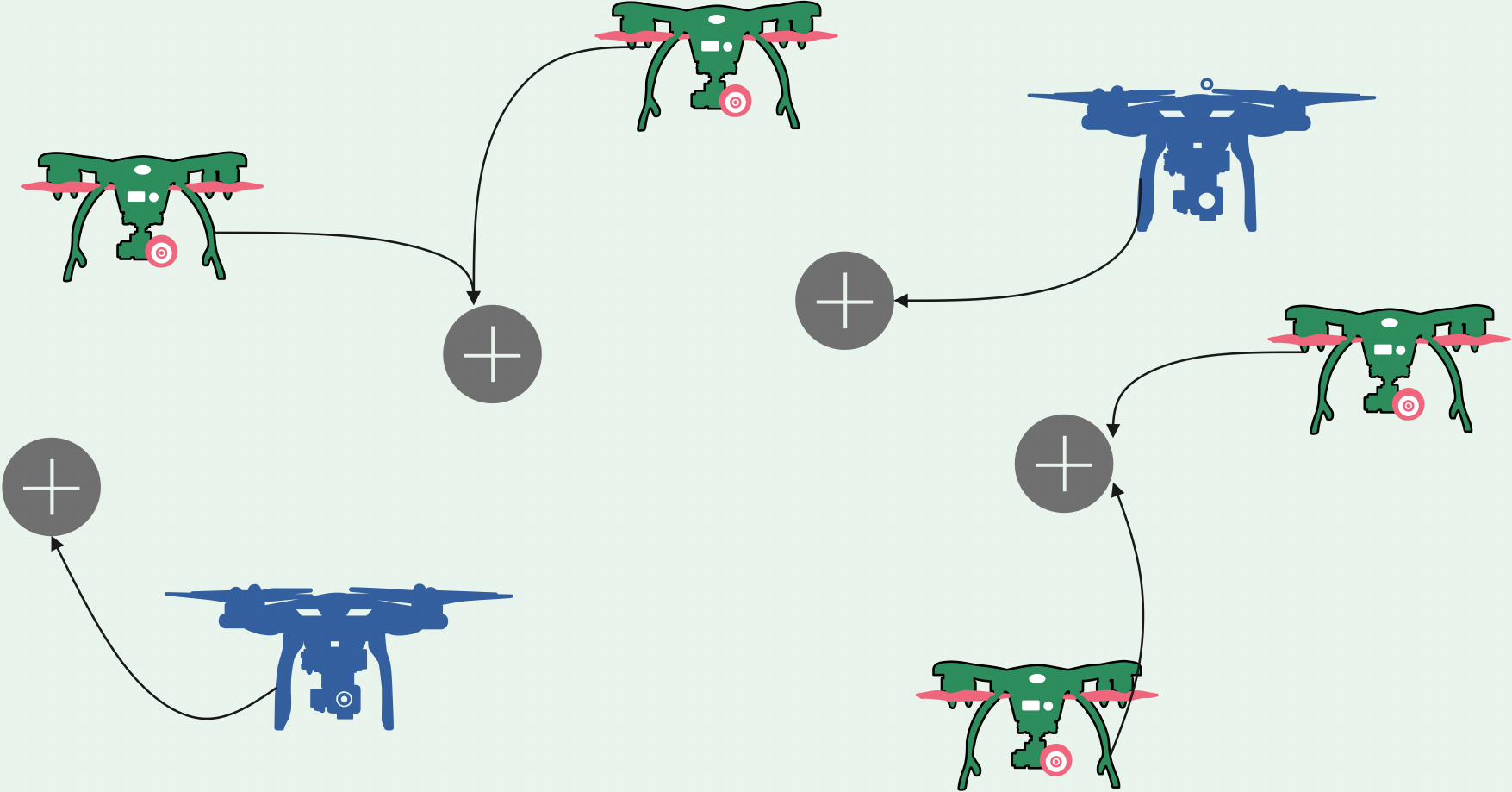}
\caption{Blue robots (equipped with range and bearing sensors) track targets (black circles) individually. Green robots (equipped with range-only sensors) require at least two to track a target collectively.}
\label{fig1}
\end{figure}

As shown in Fig.~\ref{fig1}, this paper addresses the problem of assigning heterogeneous sensors (robots with varying sensing capabilities) for multi-target tracking. To the best of our knowledge, this problem has not been previously studied. We consider two robot types: (1) sufficient sensing robots (e.g., with range and bearing sensors) that can track targets alone, and (2) limited sensing robots (with only range or bearing sensors) that must pair up for tracking. The goal is to minimize target state uncertainty by efficiently assigning robots to targets.

As a popular tool, matroid theory is particularly well-suited for handling complex combinatorial optimization problems, including those involving sensor networks and resource allocation \cite{n1,n2,n3,n4}. For the above-mentioned heterogeneous sensor assignment question, matroid theory can capture the combinatorial structure of the problem, thereby providing a natural way to model feasible solutions and allowing for efficient optimization. 

To be more specific, the feasible set of robot-target assignments can be modelled as a matroid, where the independent sets represent feasible assignments, and our goal is to find an optimal assignment that maximizes tracking quality while adhering to the robots' capabilities and sensor limitations. Via matriod theory, in this paper, we propose a greedy algorithm with constant-factor approximation bounds to efficiently allocate heterogeneous sensing robots to targets. The greedy algorithm works by iteratively selecting the most beneficial assignments, ensuring that the total quality of tracking is maximized without violating the system constraints. 

Our contributions are stated as follows. For the heterogeneous sensor assignment problem, building upon matroid theory, we introduce a greedy algorithm for efficient robot-target assignment. We rigorously establish that the proposed algorithm provides a \(1/3\)-approximation guarantee for arbitrary tracking quality functions and a \(1/2\)-approximation for non-decreasing submodular quality functions, both with polynomial-time complexity. We have also conducted extensive simulations to validate the algorithm’s effectiveness in accurately tracking targets over extended periods, with qualitative and quantitative performance assessment through empirical evaluation. Our results are similar to existing works, such as \cite{r1}, which established similar approximation bounds for the homogeneous sensor assignment problem. In this paper, we have modeled the heterogeneous assignment constraints as a matroid and leveraged its independence structure to ensure feasible assignments while maximizing observability. This theoretical advancement 
generalizes prior results to accommodate heterogeneous sensing platforms, providing a useful framework for optimal sensor allocation in complex multi-target tracking scenarios.

\section{RELATED WORKS}
A key challenge in multi-robot systems is multi-target tracking, which requires robots to collaboratively estimate and track dynamic targets in real-time. While earlier works, such as bipartite graph matching, effectively address one-to-one assignments \cite{r1,r2}, many-to-one assignments where multiple sensors track a single target are more complex due to diminishing returns. This complexity necessitates advanced algorithms for scalability and optimality, particularly in heterogeneous multi-robot systems where robots have complementary sensing, mobility, and computational capabilities.

The sensor-to-target assignment problem with homogeneous sensors has been widely studied, especially for multi-target tracking. For instance, \cite{r1} proposes approximation algorithms to improve estimator observability using greedy methods for monotone and submodular functions. Extensions to heterogeneous teams, such as in \cite{r8}, consider robots with sufficient or limited sensing.

Heterogeneous robots introduce coordination challenges but offer benefits like improved robustness and reduced cost. Optimizing estimation performance in such systems often involves formation control using Fisher information metrics \cite{1,2}. Recent efforts, such as \cite{L1}, address these challenges via architectures like scout–task decomposition with decentralized Monte Carlo tree search.



In the recent literature, resilient frameworks have been developed to allocate heterogeneous robots to tasks under adverse conditions, such as weather events or adversarial attacks (see, e.g., \cite{L2} and the references therein). 
Additionally, for scenarios with resource failures, \cite{L3} presents a resilient framework for networked heterogeneous multi-robot teams via communication network reconfiguration and one-hop observability. 


In a parallel line of research, sensor coverage, a fundamental problem in multi-robot systems, has also been addressed using heterogeneous robot teams. For example, \cite{L4} formulates sensor coverage as a graph representation learning problem, capturing heterogeneous relationships among robots to assign teams effectively. Their approach, based on regularized optimization, demonstrates the ability to learn unified representations of multi-robot systems and assign teams for optimal sensor coverage. This work emphasizes the importance of considering heterogeneous relationships in task assignment \cite{L5}. 

Similarly, task allocation in dynamic environments, such as emergency rescue scenarios, has been explored in \cite{L6}, which introduces a preference-driven approach based on hedonic games. By considering various preferences between robots and tasks, their distributed framework efficiently forms coalitions and converges to Nash-stable partitions even under strict communication limitations.

To address the complexity of multi-robot task allocation, hierarchical planning has emerged as a powerful tool. \cite{L7} presents a hierarchical planner that decomposes the problem into high-level robot allocation and low-level routing tasks. By using a Graph Neural Network as a heuristic to estimate subteam performance, their approach achieves near-optimal solutions with significantly reduced computation time. This demonstrates the potential of learning-based methods to improve the efficiency of task allocation in heterogeneous multi-robot systems. 

Distributed optimal control approaches have also been developed for managing omnidirectional sensor networks that cooperatively track moving targets \cite{m3}. The work in \cite{3} indicates that the performance of mobile sensors is significantly influenced by their distribution, with track coverage being an integral function of the sensor network's state. The introduction of normalized unused sensing capacity allows for the measurement of information gathered by each sensor relative to its theoretical maximum, facilitating a distributed coverage control strategy that adapts based on current usage \cite{4,5}.

Despite these advances, several gaps remain to be filled. First, most existing works on sensor-to-target assignment focus on homogeneous sensors (e.g., \cite{r8,r1}). Second, while some works address resilience and resource reallocation under adverse conditions (e.g., \cite{L2,L3}), they often do not consider the combinatorial structure of robot-target assignments. 


\section{PROBLEM FORMULATION}  


\subsection{Problem Setup}

\subsubsection{Robots and Targets}  
A team of \(N\) mobile robots, denoted by \(\mathcal{R} = \{1, \dots, N\}\), tracks \(M\) moving targets, denoted by \(\mathcal{T} = \{1, \dots, M\}\). The motion models for the robots and targets are described as follows:
\begin{align}\label{eq2}
\dot{\mathbf{x}}_{i,t} &= f_{i}(\mathbf{x}_{i,t}, \mathbf{u}_{i,t}), \quad \forall i \in \mathcal{R}, \\
\dot{\mathbf{y}}_{j,t} &= g_{j}(\mathbf{y}_{j,t}) + \mathbf{w}_{j,t}, \quad \forall j \in \mathcal{T},
\end{align}
where \(\mathbf{x}_{i}\) and \(\mathbf{y}_{j}\) represent the states of the \(i\)-th robot and the \(j\)-th target, respectively. The term \(\mathbf{u}_{i,t}\) denotes the control input of the \(i\)-th robot at the  \(t\)-th sampling instant. Additionally, the motion of the \(j\)-th target is stochastic, driven by zero-mean white Gaussian process noise with covariance \(\mathbf{Q}_{j,t}\), i.e., \(\mathbf{w}_{j,t} \sim \mathcal{N}(0, \mathbf{Q}_{j,t})\).

\subsubsection{Sensing Model}  

Each robot \(i\), observing target \(j\), receives measurements as follows: 
\begin{align}\label{eq3}
\mathbf{z}_{i,t}^{j} = h_{i}^{j}(\mathbf{x}_{i,t}, \mathbf{y}_{j,t}) + \mathbf{v}_{i,t}^{j},
\end{align}  
where \(\mathbf{z}_{i,t}^{j}\) denotes the measurement of target \(j\) by robot \(i\)’s sensor at time \(t\). The term \(\mathbf{v}_{i,t}^{j}\) represents Gaussian measurement noise with zero mean and covariance \(\mathbf{R}_{i,t}^{j}\), i.e., \(\mathbf{v}_{i,t}^{j} \sim \mathcal{N}(0, \mathbf{R}_{i,t}^{j})\).  

\subsubsection{Objective}  

The state of each target is estimated using an extended Kalman filter (EKF) based on measurements taken by the robots. Each target \(j\) is assigned a set of robot-actions, denoted by \(\psi(j)\). The tracking quality, denoted by \(q(\psi(j), j)\), can be computed using submodular and non-decreasing metrics, such as the log-determinant of the observability matrix. The goal is to assign heterogeneous robots to targets in order to maximize tracking quality.
\subsection{Problem Definition}  
The system consists of a robot set defined as:  
\begin{align}
\mathcal{R} = \mathcal{R}_s \cup \mathcal{R}_l = \{1, \dots, N_1\} \cup \{1, \dots, N_2\},
\end{align}
where \(\mathcal{R}_s\) represents the set of sufficient robots equipped with range and bearing sensors, and \(\mathcal{R}_l\) represents the set of limited sensing robots equipped with range or bearing only sensors. Given \(\mathcal{T} = \{1, \ldots, M\}\) as the set of targets, we require that each target \(j \in \mathcal{T}\) be tracked by either a single sufficient  robot \(i \in \mathcal{R}_s\), or a pair of limited robots \(\{i_1, i_2\} \subseteq \mathcal{R}_l\), with each robot assigned to at most one target. Therefore, one has \(|\mathcal{R}_s| + \left\lfloor \frac{|\mathcal{R}_l|}{2} \right\rfloor \geq M.\) The movements of the robots are determined via an optimal control problem. The core objective is to assign robots to targets in order to maximize tracking quality. Notably, a range-and-bearing sensor can measure both the distance and orientation of a target, making it sufficient to estimate the target's state \cite{r8}. In contrast, robots with limited sensing capabilities are not able to estimate the target's state individually. For example, it is well-known that with only range or bearing sensing, at least two robots are required to track a target \cite{r1}.

\subsubsection{Objective Function}

The objective is to maximize total tracking quality by assigning sufficient -sensing robots independently and limited-sensing robots in pairs:
\begin{align}
\max &\sum_{j \in \mathcal{T}} \bigg( \sum_{i \in \mathcal{R}_s} q(\psi(j), j) \cdot \gamma_{i,j} \cr
& + \sum_{i_1, i_2 \in \mathcal{R}_l, i_1 < i_2} q(\{\psi_1(j), \psi_2(j)\}, j) \cdot \zeta_{i_1,i_2,j} \bigg),
\end{align}
where \( \gamma_{i,j} \) is a binary assignment variable for sufficient -sensing robots, and \( \zeta_{i_1,i_2,j} \in \{-\infty,0,1\} \) represents pairwise assignments for limited-sensing robots. The values of \( \zeta_{i_1,i_2,j} \) are defined as:
    \begin{itemize}
        \item \(\zeta_{i_1,i_2,j} = 1\) means both robots \(i_1\) and \(i_2\) are assigned together to target \(j\),
        \item \(\zeta_{i_1,i_2,j} = 0\) means neither robot \(i_1\) nor \(i_2\) is assigned to target \(j\),
        \item \(\zeta_{i_1,i_2,j} = -\infty \) means only one of the two robots is assigned to target \(j\).
    \end{itemize}

\subsubsection{Constraints}

Each target is tracked by either one sufficient -sensing robot or a pair of limited-sensing robots:

\begin{align}
0 \leq \sum_{i \in \mathcal{R}_s} \gamma_{i,j} + \sum_{i_1, i_2 \in \mathcal{R}_l, i_1 < i_2} \zeta_{i_1,i_2,j} \leq 1, \quad \forall j \in \mathcal{T}.
\end{align}

\subsubsection{Observability Matrix and Tracking Quality Metric}
To quantify tracking quality, we introduce the concept of the observability matrix for both limited sensing robots (\(\mathcal{R}_l\)) and sufficient  sensing robots (\(\mathcal{R}_s\)). Using the Lie derivative, we define a tracking quality metric \( q \) based on the logarithm of the determinant of the symmetric observability matrix \( O^\top O \). The Lie derivative of the measurement function \( h_{i}^{j} \) with respect to the target state \( \mathbf{y}_{j,t} \), is given by:
\begin{align}\label{eq4}
L_{g} h_{i}^{j} = \frac{\partial h_{i}^{j}}{\partial \mathbf{y}_{j,t}} \cdot g(\mathbf{y}_{j,t}).
\end{align}

The observability matrix \( O \) is:

\begin{align}\label{eq5}
O  =\frac{\partial }{\partial \mathbf{y}_{j,t}} \begin{bmatrix}
[h_{1}^{j}, \cdots, h_{p}^{j}]^\top  \\
[L_{g} h_{1}^{j}, \cdots, L_{g} h_{p}^{j}]^\top \\
[L_{g}^2 h_{1}^{j}, \cdots, L_{g}^2 h_{p}^{j}]^\top \\
\vdots \\
[L_{g}^{T-1} h_{1}^{j}, \cdots, L_{g}^{T-1} h_{p}^{j}]^\top
\end{bmatrix}.
\end{align}
where \(p \) is the number of robots that are used to generate observability matrix \(O \), and \(T\) is the order derivative. 

\section{ASSIGNMENT ALGORITHM AND ANALYSIS}
In this section, we present our proposed greedy algorithm, i.e., Algorithm \ref{al1} where $q(\text{GREEDY})$ denotes total value obtained by the greedy approach, and show that it possess constant factor approximation bounds while running in polynomial time. 
\begin{algorithm} 
\caption{Greedy Assignment for Heterogeneous Sensor Assignment}\label{al1}
\begin{algorithmic}[1]
\State $k \gets 0$, $q(\text{GREEDY}) \gets 0$
\While{true}
    \State Compute all possible $q(\psi_i(j), j)$ when $ i \in \mathcal{R}_s$ and all possible $q(\psi_{i_1}(j),\psi_{i_2}(j) , j)$ when $ i_1, i_2\in \mathcal{R}_l$.
    \State Select one sufficient -sensing robot or one pair of limited-sensing robots with maximum $q(\cdot , j)$ defined as $q_{\max}$.
    \State $q(\text{GREEDY}) \gets q(\text{GREEDY}) + q_{\max}$.
    \State Remove target $j$ from the target set $\mathcal{T}$ and selected robot/robots from the robot set $\mathcal{R}$.
    \State $k \gets k + 1$
\EndWhile
\end{algorithmic}
\end{algorithm}
\subsection{Assignment Algorithm with Arbitrary Tracking Quality Function}
Initially, we explore the scenario in which the tracking quality function is arbitrary and have the following results in Theorem \ref{th1} (the proof can be derived by applying the principles outlined in Theorems 1 and 2 from \cite{r8}, and is therefore skipped here). 

\begin{theorem}\label{th1}
Let the environment consist of both sufficient  sensing robots and limited sensing robots, i.e., \(\mathcal{R} = \mathcal{R}_s \cup \mathcal{R}_l\). In this case, the performance of the GREEDY algorithm satisfies the inequality \(q(\text{GREEDY}) \geq \frac{1}{3} q(\text{OPT})\), where \(\text{OPT}\) represents the optimal algorithm. Additionally, the running time is \(O((N_1+N_2^2) M^2)\).
\end{theorem}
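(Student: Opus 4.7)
The plan is to cast the feasibility of heterogeneous assignments in the matroid language used by \cite{r8} and then invoke their Theorem 1, with an additional bookkeeping step to account for the enlarged ground set created by allowing pairs of limited-sensing robots.

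First, I would make the ground set $\mathcal{X}$ of candidate assignments explicit, partitioning it as $\mathcal{X} = \mathcal{X}_s \cup \mathcal{X}_l$, where $\mathcal{X}_s = \{(i,j) : i \in \mathcal{R}_s,\, j \in \mathcal{T}\}$ has cardinality $N_1 M$ and $\mathcal{X}_l = \{(\{i_1,i_2\}, j) : \{i_1,i_2\} \subseteq \mathcal{R}_l,\, j \in \mathcal{T}\}$ has cardinality $\binom{N_2}{2} M = O(N_2^2 M)$. An assignment $S \subseteq \mathcal{X}$ is feasible exactly when no two elements of $S$ share a robot and no two share a target, and the tracking-quality value of each chosen element is the corresponding $q(\cdot, j)$ used in Algorithm \ref{al1}. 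I would encode these constraints as partition matroids on $\mathcal{X}$: one enforcing that every robot (sufficient- or limited-sensing) appears in at most one selected element, and one enforcing that every target appears in at most one selected element.

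Second, I would verify that Algorithm \ref{al1} is exactly the canonical greedy procedure over this constraint system: step 3 enumerates all marginally feasible additions, and step 4 retains the one with the largest score. Applying Theorem 1 of \cite{r8} to this constraint structure then delivers $q(\text{GREEDY}) \geq \tfrac{1}{3} q(\text{OPT})$. The main obstacle here, and also the reason the bound degrades from the $1/2$ factor of the purely single-robot case in \cite{r1} to $1/3$, is that a pair-type element of $\mathcal{X}_l$ occupies two robot slots simultaneously; a careful accounting following \cite{r8} shows that this effectively introduces one additional matroid-like restriction in the intersection, moving the bound from $1/p$ with $p=2$ to $p=3$. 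I would treat this step most carefully, since it is where the heterogeneous sensing model actually enters the approximation factor.

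Finally, for the complexity, at every iteration the algorithm scans all $|\mathcal{X}_s| + |\mathcal{X}_l| = O((N_1 + N_2^2) M)$ candidate elements to identify $q_{\max}$; since every iteration removes at least one target from $\mathcal{T}$, the loop terminates after at most $M$ iterations, producing the claimed $O((N_1 + N_2^2) M^2)$ bound. Combining this with the approximation guarantee completes the proof.
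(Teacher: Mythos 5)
Your proposal matches the paper's approach: the paper gives no proof of Theorem~\ref{th1} at all and simply defers to Theorems 1 and 2 of \cite{r8}, which is exactly the reduction you carry out, and your complexity count ($O((N_1+N_2^2)M)$ candidates scanned per iteration, at most $M$ iterations) is the intended one. One small imprecision worth fixing: the robot-disjointness constraint on pair elements is not a partition matroid over $\mathcal{X}$ (a pair element would have to live in two robot blocks simultaneously); the correct framing is that the feasible assignments form a $3$-extendible independence system --- each greedy selection can conflict with at most two robot constraints and one target constraint of the optimal solution --- which is precisely what degrades the bound to $1/3$.
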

\subsection{Assignment Algorithm with Non-decreasing Submodular Tracking Quality Function}
For the assignment problem of heterogeneous sensing robots for multi-target tracking, matroid theory provides a viable framework for defining feasible robot sets and enabling efficient optimization. 

\begin{definition}  
A matroid is an ordered pair \( \mathcal{M} = (S, \mathcal{I}) \) satisfying the following conditions \cite[Chap. 16]{new81}: 
\begin{itemize}  
    \item A finite set \( S \), and  
    \item A collection \( \mathcal{I} \) of independent subsets of \( S \), satisfying:  
    \begin{enumerate}  
        \item \textbf{Non-empty set}: \( \emptyset \in \mathcal{I} \),  
        \item \textbf{Hereditary property}: If \( A \in \mathcal{I} \) and \( B \subseteq A \), then \( B \in \mathcal{I} \),  
        \item \textbf{Exchange property}: If \( A, B \in \mathcal{I} \) and \( |A| < |B| \), there exists \( x \in B \setminus A \) such that \( A \cup \{x\} \in \mathcal{I} \).  
    \end{enumerate}  
\end{itemize}  
\end{definition}  

\begin{definition}\label{m1}
The robot-target assignment matroid is defined as the transversal matroid \cite[Chap. 5]{new82} \(\mathcal{M} = (\mathcal{T}, \mathcal{I})\) over a bipartite graph \(G = (U, V, E)\), where:
\begin{itemize}
    \item \(U = \mathcal{R}_s \cup \left\{ \{i_1, i_2\} \mid i_1 \neq i_2 \in \mathcal{R}_l \right\}\) is the set of assignable units (individual sufficient  robots or unordered pairs of limited ones),
    \item \(V = \mathcal{T}\) is the set of targets,
    \item \(E \subseteq U \times V\) includes \((i, j)\) for all \(i \in \mathcal{R}_s\), and \((\{i_1, i_2\}, j)\) for all valid limited robot pairs and targets \(j \in \mathcal{T}\).
\end{itemize}
A subset \(S \subseteq \mathcal{T}\) is independent (\(S \in \mathcal{I}\)) if there exists a matching in \(G\) that covers \(S\), assigning to each \(j \in S\) either one sufficient  robot or one distinct pair of limited robots, such that no robot is reused. The rank of \(\mathcal{M}\) is the size of the largest such matching; the bases are its maximal matchable subsets.
\end{definition}
\begin{lemma}
The robot-target assignment matroid \(\mathcal{M} = (\mathcal{T}, \mathcal{I})\), as defined in Definition \ref{m1}, is a matroid. 
\end{lemma}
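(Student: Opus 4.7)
The plan is to verify that $(\mathcal{T}, \mathcal{I})$ satisfies the three axioms of the matroid definition stated just above. The construction is of transversal-matroid flavor on the bipartite graph $G$, augmented by the side condition that no individual limited robot appears in two distinct selected pairs, so the verification must track both the matching structure in $G$ and this extra compatibility constraint.

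The first two axioms are routine. Non-emptiness holds because $\emptyset \in \mathcal{I}$ is witnessed by the empty matching in $G$. For the hereditary property, given $A \in \mathcal{I}$ with a witnessing assignment $\phi_A$ and any $B \subseteq A$, I would restrict $\phi_A$ to the targets in $B$; each target of $B$ retains its original unit (either a single sufficient robot or a pair of limited robots), and only a subset of the robots previously used now appears, so no reuse is introduced. Hence $B \in \mathcal{I}$.

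The exchange axiom is the main obstacle. Given $A, B \in \mathcal{I}$ with $|A| < |B|$ and witnessing assignments $\phi_A, \phi_B$, the goal is to exhibit some $x \in B \setminus A$ with $A \cup \{x\} \in \mathcal{I}$. My approach is a standard augmenting-path argument on an auxiliary graph whose vertices are the targets in $A \cup B$ together with the underlying robots used by $\phi_A \cup \phi_B$, with edges encoding the current assignment relations. Because $|B| > |A|$, a counting argument on the number of sufficient robots used plus twice the number of limited-robot pairs used forces the existence of an alternating walk starting at some uncovered $x \in B \setminus A$ and terminating either at a robot that is free under $\phi_A$ or at a target of $A$ whose reassignment releases such a robot. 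Swapping the $\phi_A$-assignments along this walk then produces a modified $\phi_A'$ that validly covers $A \cup \{x\}$.

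The subtle point, and the principal difficulty, is that swapping limited-robot pairs can cascade: if $\phi_B$ covers $x$ with $\{r_1, r_2\}$ while $\phi_A$ uses $r_1$ inside a different pair $\{r_1, r_1'\}$ covering some $y \in A$, then freeing $r_1$ also releases $r_1'$, so the augmenting walk must carry this bookkeeping faithfully to avoid spurious reuse. I would handle this by defining exchanges at the level of units in $U$ rather than individual robots, then re-pairing any orphaned limited robots along the walk. As a sanity check under the complete-bipartite $G$ implicit in Definition~\ref{m1} (every unit is connected to every target), the independence condition collapses to $|S| \le |\mathcal{R}_s| + \lfloor |\mathcal{R}_l|/2 \rfloor$, so $\mathcal{M}$ reduces to a uniform matroid and all three axioms follow immediately.
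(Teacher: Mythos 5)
Your argument follows essentially the same route as the paper's own proof: the empty matching for non-emptiness, restriction of the witnessing matching for the hereditary property, and an alternating/augmenting-path argument on the union (the paper uses the symmetric difference \(M_A \triangle M_B\)) for the exchange axiom. Two points of comparison are worth recording. First, the cascading issue you flag --- that two distinct pair-units \(\{r_1,r_2\}\) and \(\{r_1,r_1'\}\) are different vertices of \(U\) yet share a limited robot, so a unit-level alternating path does not automatically respect the ``no robot reused'' constraint --- is a genuine subtlety; the paper's proof asserts that the augmented matching ``respects the robot assignment constraints'' without addressing it, so your explicit proposal to re-pair orphaned limited robots along the walk is, if anything, more careful, though you leave that bookkeeping as a sketch rather than carrying it out. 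Second, your closing ``sanity check'' is actually the cleanest complete proof here: since Definition~\ref{m1} takes \(E\) to contain every edge between assignable units and targets, independence of \(S \subseteq \mathcal{T}\) reduces to the cardinality condition \(|S| \le |\mathcal{R}_s| + \lfloor |\mathcal{R}_l|/2 \rfloor\), so \(\mathcal{M}\) is a uniform matroid and all three axioms are immediate --- an argument that is both more elementary than the paper's and immune to the pair-sharing subtlety. If the authors intend \(E\) to be a general (non-complete) bipartite graph in future extensions, the augmenting-path argument with explicit robot-level bookkeeping would be the one to complete.
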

\proof Let \(\mathcal{M} = (\mathcal{T}, \mathcal{I})\) be the robot-target assignment matroid defined in Definition \ref{m1}. We need to verify each matroid condition.
\begin{enumerate}
    \item The empty set \(\emptyset \subseteq \mathcal{T}\) requires no targets to be covered by a matching. The empty matching (containing no edges) is a valid matching in \(G\), as it satisfies all constraints. Thus, \(\emptyset \in \mathcal{I}\).
    \item Suppose \(A \in \mathcal{I}\) and \(B \subseteq A\). Since \(A \in \mathcal{I}\), there exists a matching \(M_A\) in \(G\) that covers \(A\). For each target \(j \in A\), there is an edge \((u, j) \in M_A\), where \(u \in U\) (either \(u \in \mathcal{R}_s\) or \(u = \{i_1, i_2\} \in \left\{ \{i_1, i_2\} \mid i_1, i_2 \in \mathcal{R}_l, i_1 \neq i_2 \right\}\)), and no two edges in \(M_A\) share a vertex in \(U\). Moreover, the robots (sufficient  or limited) involved in \(M_A\) are distinct, ensuring each robot is used at most once. To cover \(B \subseteq A\), construct a matching \(M_B\) by taking the subset of edges in \(M_A\) incident to targets in \(B\). Since \(M_B \subseteq M_A\), it is a valid matching (using a subset of the robots), and it covers \(B\). Thus, \(B \in \mathcal{I}\). Hence, the hereditary property holds.

    \item Let \(A, B \in \mathcal{I}\) with \(|A| < |B|\). Since \(A, B \in \mathcal{I}\), there exist matchings \(M_A\) and \(M_B\) in \(G\) covering \(A\) and \(B\), respectively. Consider the symmetric difference graph \(G' = (U, V, M_A \triangle M_B)\), where \(M_A \triangle M_B = (M_A \setminus M_B) \cup (M_B \setminus M_A)\). In \(G'\), each vertex in \(\mathcal{T}\) has degree at most 2 (one edge from \(M_A\), one from \(M_B\)), and each vertex in \(U\) has degree at most 2 (since each \(u \in U\) is matched at most once in each matching). Thus, \(G'\) consists of disjoint paths and cycles, with edges alternating between \(M_A\) and \(M_B\).

    Since \(|A| < |B|\), there are more edges in \(M_B\) than in \(M_A\) incident to \(B \setminus A\). Choose a target \(x \in B \setminus A\). In \(G'\), \(x\) is incident to an edge \((u, x) \in M_B\). If \(x\) is not incident to any edge in \(M_A\), then \(u\) (an sufficient  robot or a pair of limited robots) is not used in \(M_A\). Adding \((u, x)\) to \(M_A\) forms a matching \(M_A \cup \{(u, x)\}\) that covers \(A \cup \{x\}\). Since the robots in \(u\) are distinct from those in \(M_A\), this matching respects the constraint that each robot is used at most once, so \(A \cup \{x\} \in \mathcal{I}\).

    If \(x\) lies on an alternating path in \(G'\), there must be an augmenting path from \(x\) to some target \(y \in A \) (since \(|B| > |A|\), the number of \(M_B\)-edges exceeds \(M_A\)-edges, ensuring such a path exists). Augmenting \(M_A\) along this path (taking edges in \(M_B \setminus M_A\) and omitting edges in \(M_A \setminus M_B\)) produces a new matching of size \(|A| + 1\) that covers \(A \cup \{x\}\). The robots used in this new matching remain distinct, as the path alternates between matchings and respects the robot assignment constraints. Thus, \(A \cup \{x\} \in \mathcal{I}\). Hence, the exchange property holds.
\end{enumerate}
Since \(\mathcal{M} = (\mathcal{T}, \mathcal{I})\) satisfies all three matroid conditions, it is a matroid. \hfill $\blacksquare$

Let \( U^{t} \) be the set of assignable units (sufficient  robots or pairs of limited robots) considered in the first \( t + 1 \) iterations of the greedy algorithm before the addition of the \( (t + 1) \)-st unit. Also, consider the robot-target assignment matroid \(\mathcal{M} = (\mathcal{T}, \mathcal{I})\) as defined in Definition~\ref{m1}, where \(\mathcal{T} = \{1, \ldots, M\}\) is the set of targets, and \(\mathcal{I}\) is the collection of independent sets corresponding to matchings in the bipartite graph \(G = (U, V, E)\), with \(U = \mathcal{R}_s \cup \left\{ \{i_1, i_2\} \mid i_1, i_2 \in \mathcal{R}_l, i_1 \neq i_2 \right\}\), \(V = \mathcal{T}\), and edges \(E\) connecting each target to either an sufficient  robot or a pair of limited robots. Define the rank function \( r(S) \) for \( S \subseteq \mathcal{T} \) as the cardinality of the largest independent set contained in \( S \), i.e., the size of the largest matching in \( G \) covering a subset of \( S \). Define the span of \( S \) in the matroid as
\begin{align}
\text{span}(S) = \{ j \in \mathcal{T} : r(S \cup \{ j \}) = r(S) \}.
\end{align}

\begin{theorem}
Consider the robot-target assignment matroid \(\mathcal{M} = (\mathcal{T}, \mathcal{I})\) as defined in Definition~\ref{m1}, where \(\mathcal{T}\) is the set of targets, and \(\mathcal{I}\) represents feasible assignments of targets to either an sufficient  robot from \(\mathcal{R}_s\) or a distinct pair of limited robots from \(\mathcal{R}_l\). If the greedy algorithm is applied to the robot-target assignment problem with matroid \(\mathcal{M}\), then
\begin{align}
q(\text{GREEDY}) \geq \frac{1}{2} q(\text{OPT}),
\end{align}
where \( q(\cdot) \) is a nondecreasing submodular tracking quality function, \( q(\text{OPT}) \) is the optimal value of the tracking quality over all feasible assignments, and \( q(\text{GREEDY}) \) is the value obtained from the greedy algorithm.
\end{theorem}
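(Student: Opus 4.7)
The plan is to adapt the classical Fisher--Nemhauser--Wolsey analysis of the greedy algorithm for maximizing a nondecreasing submodular function over a matroid, specialized to the robot-target assignment matroid $\mathcal{M}=(\mathcal{T},\mathcal{I})$ of Definition~\ref{m1}. Since the preceding lemma certifies that $\mathcal{M}$ is a genuine matroid, its exchange axiom is available throughout. Let $\Gamma=\{g_{1},\ldots,g_{K}\}$ be the targets added by Algorithm~\ref{al1} in the order selected, write $\Gamma^{t}=\{g_{1},\ldots,g_{t}\}$ with $\Gamma^{0}=\emptyset$, and let $O\in\mathcal{I}$ be an optimal independent set with $q(O)=q(\text{OPT})$. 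Because Algorithm~\ref{al1} halts only when no further target can be feasibly assigned, $\Gamma$ is a basis of $\mathcal{M}$, and in particular $|\Gamma|\geq|O|$.

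The first substantive step is to construct an injection $\phi:O\to\Gamma$ such that $(\Gamma\setminus\{\phi(o)\})\cup\{o\}\in\mathcal{I}$ for every $o\in O$. This follows from a Brualdi-type symmetric basis exchange: extend $O$ to a basis $O'\supseteq O$ with $|O'|=|\Gamma|$, obtain the exchange bijection between $O'$ and $\Gamma$, and restrict it to $O$. Writing $\phi(o)=g_{t(o)}$ and noting $\Gamma^{t(o)-1}\subseteq\Gamma\setminus\{g_{t(o)}\}$, the hereditary property yields $\Gamma^{t(o)-1}\cup\{o\}\in\mathcal{I}$, so $o$ was a feasible candidate at iteration $t(o)$ of Algorithm~\ref{al1}. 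The greedy selection rule therefore gives
\begin{align}\label{eq:greedy-pick}
q(\Gamma^{t(o)})-q(\Gamma^{t(o)-1})\;\geq\;q(\Gamma^{t(o)-1}\cup\{o\})-q(\Gamma^{t(o)-1}).
\end{align}

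To close the argument, I would use monotonicity to write $q(\text{OPT})\leq q(O\cup\Gamma)$, enumerate $O$ in increasing order of $t(\cdot)$ as $o_{1},\ldots,o_{|O|}$, and telescope $q(O\cup\Gamma)-q(\Gamma)$ along this enumeration. Submodularity, applied summand-by-summand using $\Gamma^{t(o_{i})-1}\subseteq\Gamma\cup\{o_{1},\ldots,o_{i-1}\}$, upper-bounds the $i$-th term by $q(\Gamma^{t(o_{i})-1}\cup\{o_{i}\})-q(\Gamma^{t(o_{i})-1})$. Substituting \eqref{eq:greedy-pick}, exploiting injectivity of $\phi$ so that the indices $\{t(o_{i})\}$ are distinct, and majorizing by the full telescoping $\sum_{t=1}^{K}[q(\Gamma^{t})-q(\Gamma^{t-1})]\leq q(\Gamma)$, we reach $q(\text{OPT})\leq 2q(\Gamma)$, i.e., $q(\text{GREEDY})\geq\tfrac{1}{2}q(\text{OPT})$.

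The principal obstacle is producing the injection $\phi$ with the required exchange property. For the transversal matroid of Definition~\ref{m1}, this ultimately reduces to an augmenting-path argument on the bipartite graph defining $\mathcal{M}$, but some care is needed because the assignable units in $U$ may be overlapping pairs of limited-sensing robots, so augmenting paths must respect the constraint that no single limited-sensing robot appears in two distinct pairs of any matching. The preceding lemma's verification of the exchange axiom has already absorbed this bookkeeping, so the proof may simply invoke it; everything downstream of \eqref{eq:greedy-pick} is routine submodular algebra.
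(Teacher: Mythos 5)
Your proposal is correct and lands in the same Fisher--Nemhauser--Wolsey framework the paper uses: both arguments start from the monotone/submodular bound $q(\text{OPT}) \leq q(S) + \sum_{o \in O} \rho_o(S)$ and then charge each optimal element's marginal gain to a distinct greedy iteration, where the greedy selection rule caps it by $\rho_t$. The difference is in how that charging is justified. The paper cites the rank-based counting inequality of Fisher--Nemhauser--Wolsey (their Proposition 2.2), i.e., it groups the optimal elements by the iteration window $U^{t}-U^{t-1}$ in which they were ``spanned,'' sets $s_{t-1}=|T\cap(U^{t}-U^{t-1})|$, and uses $\sum_{t}\rho_{t-1}s_{t-1}\leq\sum_{t}\rho_{t-1}$, which follows from $\sum_{i\leq t}s_{i-1}\leq r(U^{t})\leq t$ together with monotonicity of the $\rho_t$. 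You instead build an explicit injection $\phi:O\to\Gamma$ via a Brualdi-type symmetric basis exchange and use the hereditary property to certify that each $o$ was feasible at iteration $t(\phi(o))$; injectivity then replaces the counting lemma. Your route is more self-contained and makes the charging concrete (useful since the transversal-matroid structure here is nonstandard, with overlapping robot pairs as units), at the cost of invoking Brualdi's exchange theorem; the paper's route is shorter by citation but leaves the combinatorial bookkeeping implicit. One caveat you share with the paper: Algorithm~1 commits to specific robot units as it proceeds, so the claim that any $o$ with $\Gamma^{t-1}\cup\{o\}\in\mathcal{I}$ is actually among the candidates scored at iteration $t$ requires that the already-committed units can be completed to a matching covering $\Gamma^{t-1}\cup\{o\}$; neither proof addresses this explicitly, so it is not a defect of your argument relative to the paper's, but it is worth flagging if you write the proof out in full.
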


\begin{proof}
Let \( T \subseteq \mathcal{T} \) denote the optimal solution, i.e., a maximum independent set in \(\mathcal{I}\) that maximizes \( q(T) \), and let \( S \subseteq \mathcal{T} \) denote the greedy solution with \( |S| = M \), where \( M \) is the number of targets. Let \( S^{t} \subseteq \mathcal{T} \) denote the greedy solution at iteration \( t \) of the greedy algorithm, with \( S^{0} = \emptyset \). For each iteration \( t = 1, \dots, M \), and let \( u(t) \in U \) be the \( t \)-th assignable unit (either an sufficient  robot from \(\mathcal{R}_s\) or a pair of limited robots from \(\{ \{i_1, i_2\} \mid i_1, i_2 \in \mathcal{R}_l, i_1 \neq i_2 \}\)) selected by the greedy algorithm, and define
\begin{align}
\rho_{t} = \rho_{u(t)}(S^{t}) = q(S^{t} \cup \{ j(t) \}) - q(S^{t}),
\end{align}
where \( j(t) \in \mathcal{T} \) is the target assigned to unit \( u(t) \) at iteration \( t \). Define
\begin{align}
s_{t-1} = |T \cap (U^{t} - U^{t-1})|,
\end{align}
where \( U^{t-1} \subseteq U \) represents the set of all assignable units considered in the first \( t \) iterations before adding the \( t \)-th unit.

From Proposition 2.2 in \cite{r7}, we have
\begin{align}
\sum_{t=1}^{M} \rho_{t-1} s_{t-1} \leq \sum_{t=1}^{M} \rho_{t-1}.
\end{align}
Based on the submodularity of \( q \) and Proposition 2.1 of \cite{r6}, we derive
\begin{align}
q(\text{OPT}) &= q(T) \leq q(S) + \sum_{j \in T \setminus S} \rho_j(S) \leq q(S) + \sum_{j \in T} \rho_j(S) \notag \\
&= q(S) + \sum_{t=1}^{M} \sum_{j \in T \cap (U^{t} - U^{t-1})} \rho_j(S) \notag \\
&\leq q(S) + \sum_{t=1}^{M} \rho_{t-1} s_{t-1} \leq q(\text{GREEDY}) + \sum_{t=1}^{M} \rho_{t-1} \notag \\
&\leq q(\text{GREEDY}) + (q(\text{GREEDY}) - q(\emptyset)),
\end{align}
which implies
\begin{align}
q(\text{GREEDY}) \geq \frac{1}{2} q(\text{OPT}).
\end{align}
This completes the proof. \hfill $\blacksquare$
\end{proof}

\section{SIMULATION RESULTS AND DISCUSSIONS}
In this section, we evaluate the performance of the proposed greedy algorithm through comprehensive simulations. Note that the algorithms are independent of the motion models. The motion models introduced here are for simulation and validation purposes only. Moreover, the main assignment problem is NP-complete, rendering it computationally intractable to derive the optimal solution in polynomial time, especially for large-scale instances. Consequently, we restrict our comparison between the optimal assignment and the greedy solution to scenarios with a small number of targets, where the optimal solution can still be feasibly computed. Furthermore, the state of each target is estimated using an Extended Kalman Filter (EKF) based on measurements obtained from the robot sensors.

\subsection{Robot Motion Model}
Each robot \(i \in \mathcal{R}\) operates according to the unicycle motion model, described by the following equations:  

\begin{align}  
\begin{pmatrix}  
\dot{\mathbf{x}}_{i,t}^{1} \\
\dot{\mathbf{x}}_{i,t}^{2} \\
\dot{\theta}_{i,t}  
\end{pmatrix}  
=  
\begin{pmatrix}  
v_{i}  \cos(\theta_{i,t}) \\
v_{i}  \sin(\theta_{i,t}) \\
\omega_{i}  
\end{pmatrix},  
\end{align}  
where the state of the robot is represented as \(\mathbf{x}_{i} = [x_{i,t}^{1}, x_{i,t}^{2}, \theta_{i,t}]^{\top}\), with \([x_{i,t}^{1}, x_{i,t}^{2}]^{\top}\) indicating the position in the 2D plane and \(\theta_{i,t}\) representing the robot's orientation relative to the world frame. The vector \(\mathbf{u}_{i} = [v_{i}, \omega_{i}]^{\top}\) denotes the robot's action (or control input), where \(v_{i}\) and \(\omega_{i}\) are the linear and angular velocities, respectively.

\subsection{Target Motion Model}
Each target \( j \in \mathcal{T} \) follows a circular motion trajectory with added Gaussian noise. The motion dynamics are given by:

\begin{equation}  
\begin{pmatrix}  
    \dot{\mathbf{y}}_{j,t}^{1} \\  
   \dot{\mathbf{y}}_{j,t}^{2}  
\end{pmatrix}  
=  
\begin{pmatrix}  
    -d_{j} \phi_j \sin(\phi_{j}t) \\  
    d_{j} \phi_j \cos( \phi_{j}t)  
\end{pmatrix}  
+ \mathbf{w}_{j,t},  
\end{equation}
where the noise vector is given by \( \mathbf{w}_{j,t} \sim \mathcal{N}(0, \mathbf{Q}) \), with covariance matrix:

\begin{equation}  
\mathbf{Q} = \begin{bmatrix}  
    \sigma_{j}^{2} & 0 \\
    0 & \sigma_{j}^{2}  
\end{bmatrix}, \quad  \sigma_{j}=0.2.  
\end{equation}

\subsection{Observation Model}
The range sensor measurement model is
\begin{equation}  
h_r(\mathbf{x}_i, \mathbf{y}_j) = \frac{1}{2} \bigg( (y_{j}^{2} - x_{i}^{2})^{2} + (y_{j}^{1} - x_{i}^{1})^{2} \bigg)+ \mathbf{v}_{r_i}^{j},
\end{equation}
where \(\mathbf{v}_{r_i}^{j} \sim \mathcal{N}(0, \sigma_r^2) \), \( \sigma=0.2 \). Also, the bearing sensor measurement model is
\begin{equation}  
h_b(\mathbf{x}_i, \mathbf{y}_j) = \text{atan2}(y_{j}^{2} - x_{i}^{2}, y_{j}^{1} - x_{i}^{1}) - \theta_{i}+ v_{b_i}^{j},
\end{equation}
where \(\mathbf{v}_{b_i}^{j} \sim \mathcal{N}(0, \sigma_b^2) \), \( \sigma=0.2 \). For a single robot \( i \in \mathcal{R}_s \), the tracking quality is logarithm determinant of symmetric observability matrix constructed by the Lie derivative based on \eqref{eq4} and \eqref{eq5} as follows:
\begin{equation}
\frac{\partial h_r}{\partial \mathbf{y}_j} =
\begin{bmatrix} (y_j^1 - x_i^1) & (y_j^2 - x_i^2) 
\end{bmatrix}^T,
\end{equation}
\begin{equation}
L_g h_r = -(y_j^1 - x_i^1) d_{j} \phi_j \sin(\phi_j t) + (y_j^2 - x_i^2) d_{j} \phi_j \cos(\phi_j t).
\end{equation}
Also we have 
\begin{equation}
\frac{\partial h_b}{\partial \mathbf{y}_j} =
\frac{1}{(y_j^1 - x_i^1)^2 + (y_j^2 - x_i^2)^2}
\begin{bmatrix}-( y_j^2 - x_i^2 )\\  (y_j^1 - x_i^1) \end{bmatrix}^T,
\end{equation}
and,
\begin{equation}
L_g h_b = \frac{(y_j^2 - x_i^2) d_{j} \phi_j \sin(\phi_j t) + (y_j^1 - x_i^1) d_{j} \phi_j \cos(\phi_j t)}{(y_j^1 - x_i^1)^2 + (y_j^2 - x_i^2)^2}.
\end{equation}
Suppose
\begin{align}
&a=\frac{\partial L_g h_b }{\partial y_j^1} = \frac{d_{j} \phi_j \cos(\phi_j t) }{\left[(y_j^1 - x_i^1)^2 + (y_j^2 - x_i^2)^2\right]}\cr
& -\frac{2(y_j^1 - x_i^1)d_{j} \phi_j \bigg[(y_j^2 - x_i^2)\sin(\phi_j t)+(y_j^1 - x_i^1)\cos(\phi_j t) \bigg]}{\left[(y_j^1 - x_i^1)^2 + (y_j^2 - x_i^2)^2\right]^2},\cr
\end{align}
and
\begin{align}
&b=\frac{\partial L_g h_b }{\partial y_j^2} = \frac{d_{j} \phi_j \sin(\phi_j t) }{\left[(y_j^1 - x_i^1)^2 + (y_j^2 - x_i^2)^2\right]}\cr
& -\frac{2(y_j^2 - x_i^2)d_{j} \phi_j \bigg[(y_j^2 - x_i^2)\sin(\phi_j t)+(y_j^1 - x_i^1)\cos(\phi_j t) \bigg]}{\left[(y_j^1 - x_i^1)^2 + (y_j^2 - x_i^2)^2\right]^2},\cr
\end{align}
then, observability matrix is obtained based on Lie derivative as follows:
\begin{equation}
O=
\begin{bmatrix}

    (y_j^1 - x_i^1) & (y_j^2 - x_i^2) \\
    \frac{-(y_j^2 - x_i^2)}{(y_j^1 - x_i^1)^2 + (y_j^2 - x_i^2)^2} & \frac{(y_j^1 - x_i^1)}{(y_j^1 - x_i^1)^2 + (y_j^2 - x_i^2)^2} \\
    -d_j \phi_j \sin(\phi_j t) &  d_j \phi_j \cos(\phi_j t) \\
    a & b
\end{bmatrix}.
\end{equation}
Similarly, for a pair of robots \( i_1, i_2 \in \mathcal{R}_l \) observability matrix is obtained as follows:

\begin{equation}
O =
\begin{bmatrix}

    (y_j^1 - x_{i_1}^1) & (y_j^2 - x_{i_1}^2) \\
    (y_j^1 - x_{i_2}^1) & (y_j^2 - x_{i_2}^2) \\
    d_j \cos(\phi_j t) &  d_j \sin(\phi_j t) \\
\end{bmatrix}.
\end{equation}
\begin{figure}
\centering
\subfloat[\( k=50 \)]
{{\includegraphics[width=\linewidth]{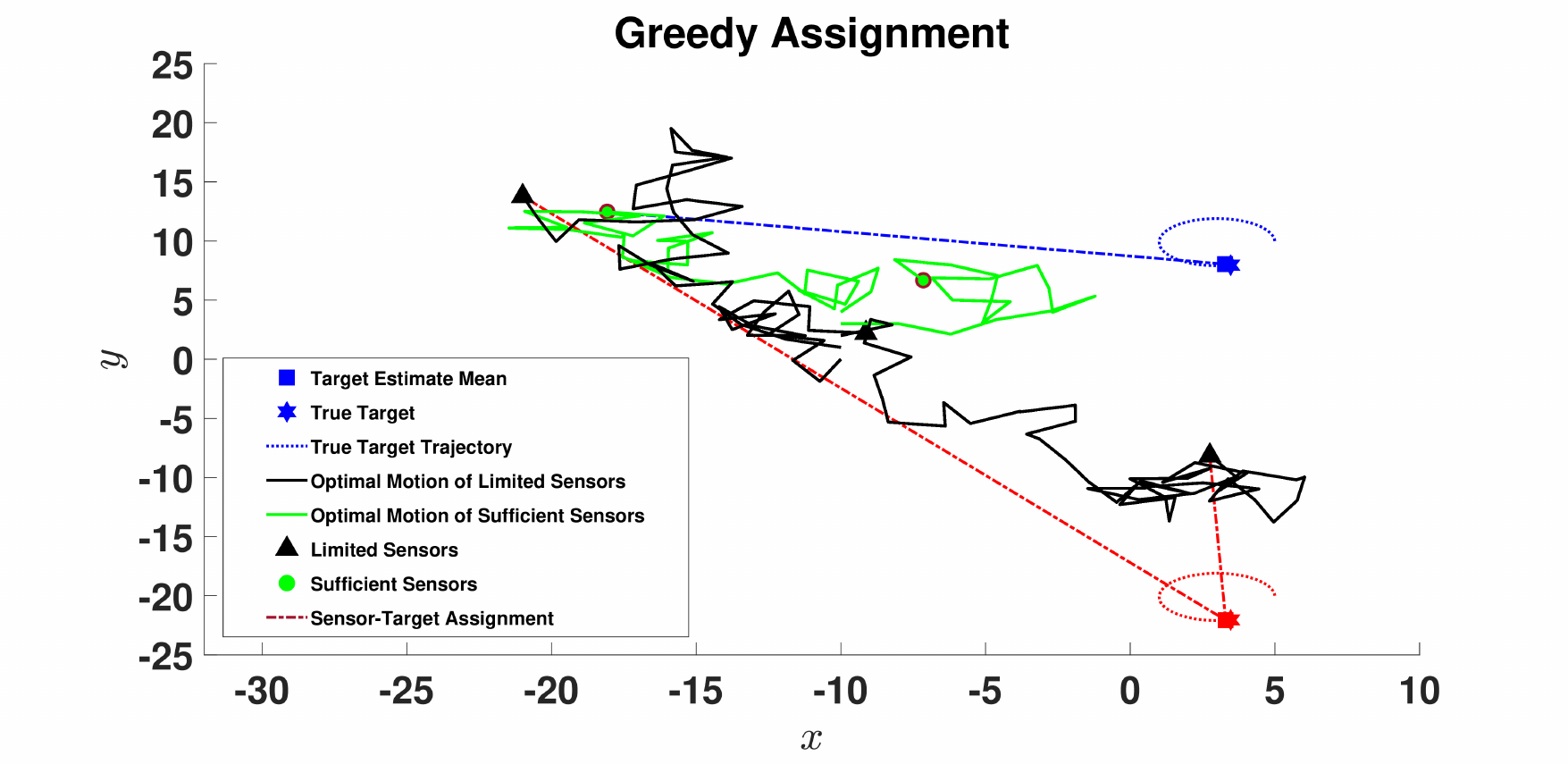}}}\\
\subfloat[\( k=100 \)]
{{\includegraphics[width=\linewidth]{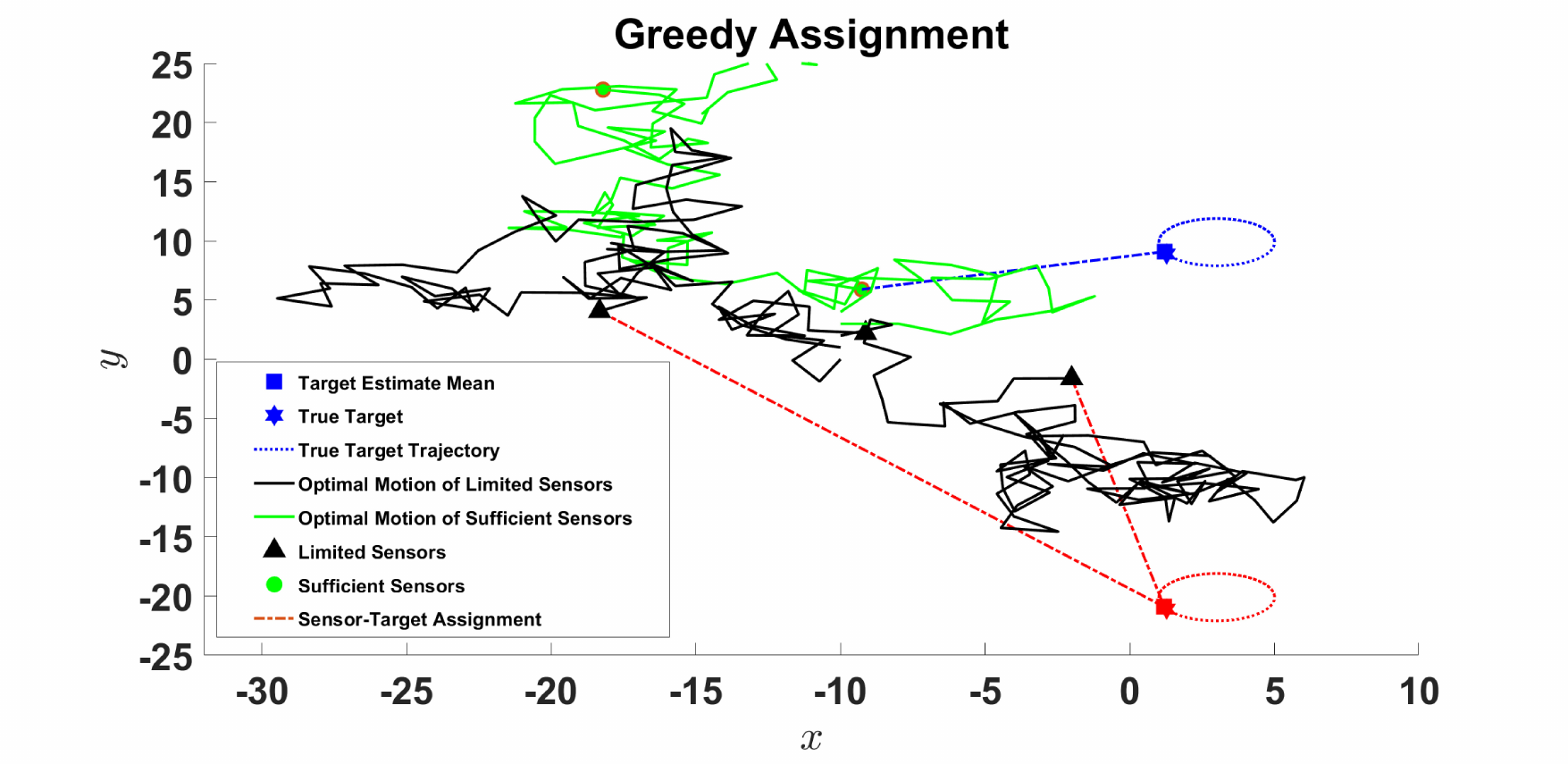}}}
\caption{Greedy assignment for multi-target tracking with circular motion. Symbols: Hexagram (true position), filled square (estimate mean), dotted circle (trajectory). Upward triangle (limited sensing robots), green circle (sufficient  robots). Dash-dotted lines (sensor-target assignment). Green solid lines (sufficient  robots' optimal movement), black solid lines (limited sensing robots' optimal movement).}\label{fig2}
\end{figure}
\subsection{Optimal Action}
In our simulations, in addition to the sensor assignment problem, we also consider the optimal selection of sensor actions. To be more specific, consider the Hamiltonian:
\begin{align}
&H(\mathbf{x}_{i,t}, \mathbf{y}_{j,t}, \mathbf{u}_{i,t}, \boldsymbol{\lambda}_{i,t}) = \boldsymbol{\lambda}^{1}_{i,t} (v_{i} \cos\theta_{i,t}) + \boldsymbol{\lambda}^{2}_{i,t}(v_{i} \sin\theta_{i,t}) \cr
&+ \boldsymbol{\lambda}^{3}_{i,t} \omega_{i} + \frac{1}{2}\left[ (x_{i,t}^{1}-y_{j,t}^{1})^2 + (x_{i,t}^{2}-y_{j,t}^{2})^2 + (v_{i}^2+\omega_{i}^2)\right],
\end{align}
From Pontryagin's Maximum Principle \cite[Chap. 5]{n5,Kirk}, the optimal action satisfies:
\begin{align}
\begin{cases}
v_{i} + \boldsymbol{\lambda}^{1}_{i,t} \cos\theta_{i,t} + \boldsymbol{\lambda}^{2}_{i,t} \sin\theta_{i,t} = 0, \\
\omega_{i} + \boldsymbol{\lambda}^{3}_{i,t} = 0,
\end{cases}
\end{align}
with boundary conditions:
\begin{align}
\boldsymbol{\lambda}_{i,T} = 0, \quad \forall i \in \psi(j).
\end{align}
State variables have known initial conditions. In our evaluation, we have set \( M \in \{1, 2, 3\} \), \( N_1 = 2 \), \( N_2 = 3 \). Targets follow circular motion with \( v_j = 2 \, \text{m/s} \), \( \phi_j = 0.1 \, \text{rad/s} \). Robots have unicycle motion with \( v_{\text{max}} = 2 \, \text{m/s} \).

\begin{table*}
    \centering
      \caption{Root Mean Square Error (RMSE) for two targets at different time steps using the greedy assignment algorithm \ref{al1} and the optimal assignment.}
    \label{table1}
    \begin{tabular}{|c|c|c|c|c|}
        \hline
            & \multicolumn{2}{|c|}{Algorithm \ref{al1}}&\multicolumn{2}{|c|}{Optimal Assignment} \\ \hline
        Time step & RMSE of Target 1 & RMSE of Target 2& RMSE of Target 1 & RMSE of Target 2 \\ \hline
   \( k=1 \)& \( 0.7071 \,\mathrm{m} \)& \( 0.7071 \,\mathrm{m}\) & \( 0.7071 \,\mathrm{m} \)& \( 0.7071 \,\mathrm{m}\)\\ \hline
   \( k=10 \)& \( 0.3518 \,\mathrm{m} \)& \( 0.3181 \,\mathrm{m}\) & \( 0.3088 \,\mathrm{m} \)& \( 0.3155 \,\mathrm{m}\)\\ \hline
   \( k=20 \)& \( 0.2969 \,\mathrm{m} \)& \( 0.3123 \,\mathrm{m}\) & \( 0.2288 \,\mathrm{m} \)& \( 0.2670 \,\mathrm{m}\)\\ \hline
   \( k=30 \)& \( 0.2696 \,\mathrm{m} \)& \( 0.2900 \,\mathrm{m}\) & \( 0.1974 \,\mathrm{m} \)& \( 0.2482 \,\mathrm{m}\)\\ \hline
   \( k=40 \)& \( 0.2540 \,\mathrm{m} \)& \( 0.2824 \,\mathrm{m}\) & \( 0.1974 \,\mathrm{m} \)& \( 0.2362 \,\mathrm{m}\)\\ \hline
   \( k=50 \)& \( 0.2431 \,\mathrm{m} \)& \( 0.2635 \,\mathrm{m}\) & \( 0.1844 \,\mathrm{m} \)& \( 0.2309 \,\mathrm{m}\)\\ \hline
   \( k=75 \)& \( 0.2322 \,\mathrm{m} \)& \( 0.2896 \,\mathrm{m}\) & \( 0.1819 \,\mathrm{m} \)& \( 0.2255 \,\mathrm{m}\)\\ \hline
   \( k=100 \)& \( 0.2318 \,\mathrm{m} \)& \( 0.2846 \,\mathrm{m}\) & \( 0.1725 \,\mathrm{m} \)& \( 0.2208 \,\mathrm{m}\)\\ \hline
    \end{tabular}
  \end{table*}
Figure~\ref{fig2} illustrates the greedy assignment strategy for multi-target tracking using sufficient  (green circles) and limited (triangles) sensing robots at \( k = 50 \) and \( k = 100 \). Targets move along a circular path (hexagram: true position; filled square: estimate; dotted circle: trajectory), with dashed lines indicating sensor assignments. Robots adjust their positions over time to maintain tracking accuracy, demonstrating the strategy’s effectiveness.

Figure~\ref{fig3} plots the Euclidean error for two targets across 100 steps. Target 1 (red) maintains an error below \(0.25\,\text{m}\) for most steps, while Target 2 (blue) shows a higher initial error, likely due to initialization, but later converges below \(0.4\,\text{m}\). Table~\ref{table1} summarizes the Root Mean Square Error (RMSE) for both targets at selected time steps, comparing the greedy assignment algorithm (Algorithm \ref{al1}) with the optimal assignment strategy. For Target 1, the greedy algorithm’s RMSE decreases from 0.7071 m at \( k=1 \) to \(0.2318 \,\mathrm{m}\) at \( k=100 \), while for Target 2, it decreases from \(0.7071\,\mathrm{m}\) to \(0.2635\,\mathrm{m}\) at \( k=50 \) but slightly increases to \(0.2846\,\mathrm{m}\) at \( k=100 \), possibly due to sensor limitations. The optimal assignment consistently achieves lower RMSEs, reaching \(0.1725\,\mathrm{m}\) for Target 1 and \(0.2208\,\mathrm{m}\) for Target 2 at \( k=100 \). Despite this, the greedy algorithm remains competitive, with RMSEs close to optimal, suggesting its robustness for real-time applications.

\begin{figure}
\centering
{{\includegraphics[width=9cm, height=4.5cm]{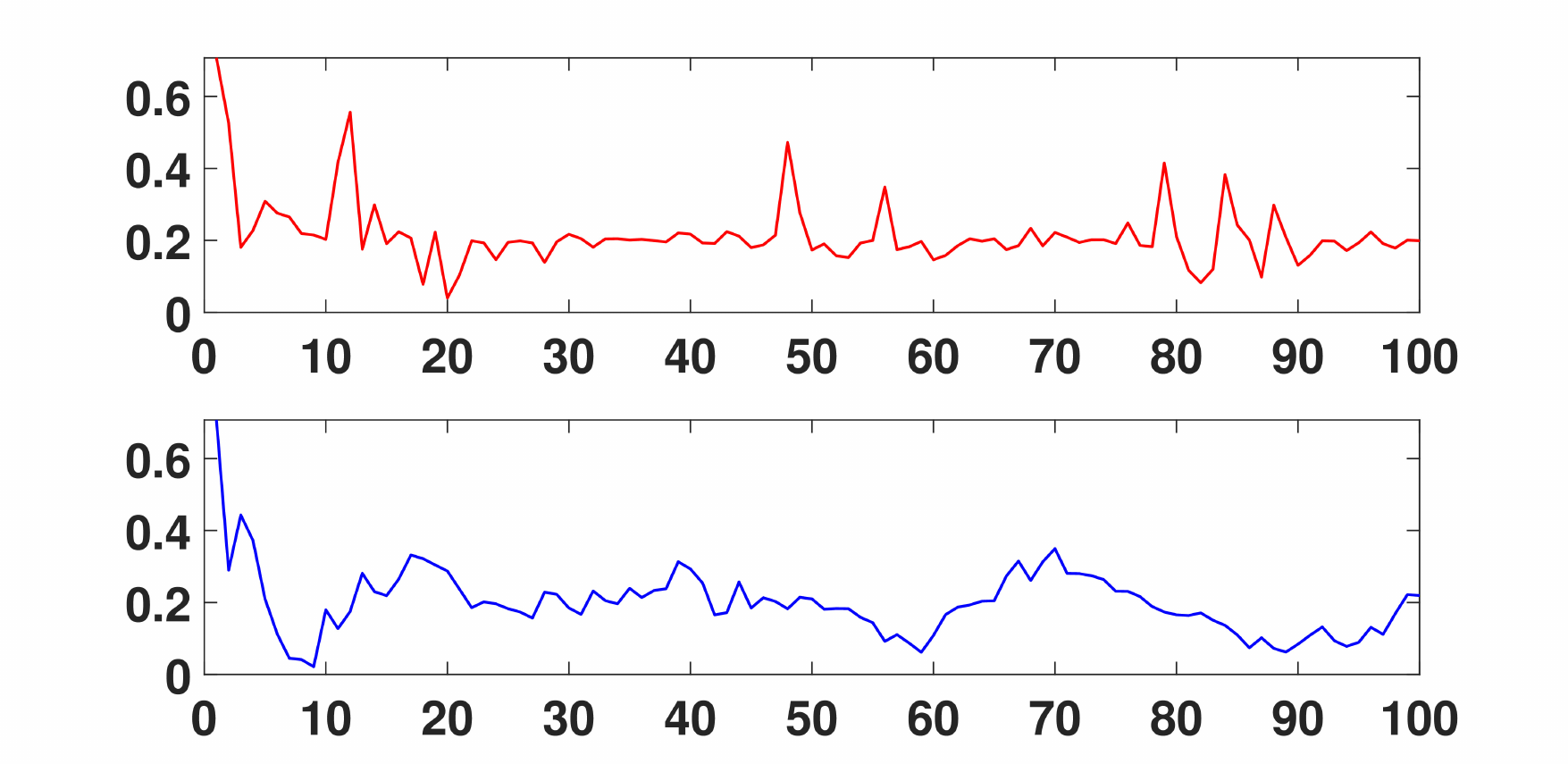}}}
\caption{Euclidean error related to two targets within 100 time steps using the greedy assignment}\label{fig3}
\end{figure}

\begin{figure}
\centering
{{\includegraphics[width=9cm, height=4.5cm]{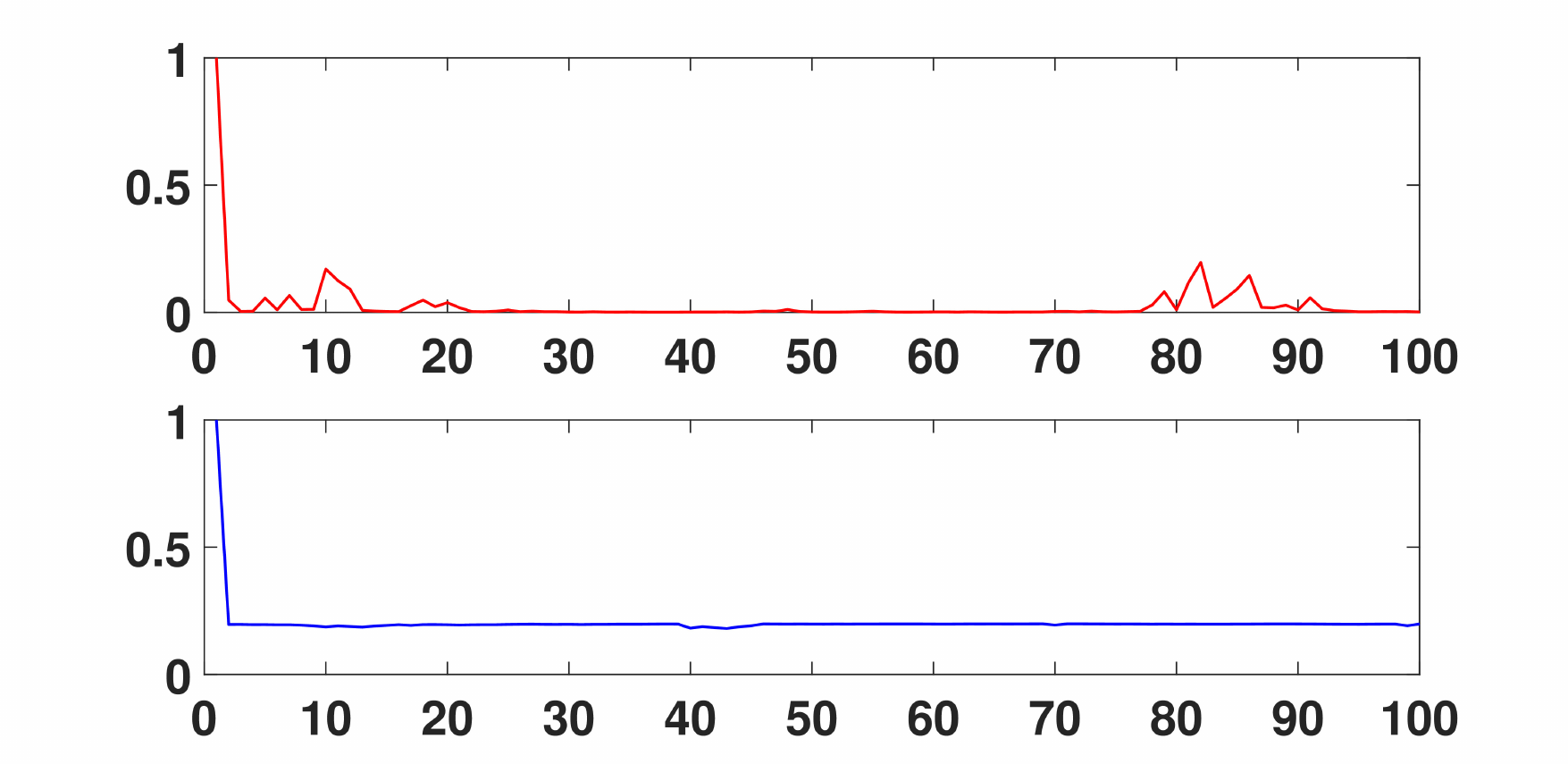}}}
\caption{Comparison of trace of estimated covariance for two targets within 100 time steps by the greedy assignment.}\label{fig4}
\end{figure}

Figure~\ref{fig4} presents the trace of estimated covariance for two targets over 100 steps using the greedy assignment. The red curve exhibits a sharp initial decline followed by some minor fluctuations, while the blue curve stabilizes at a lower level (compared to Target 1), reflecting slightly more reliable tracking for Target 2. Figure~\ref{fig6} shows the total tracking quality for both the greedy and optimal assignments. Robot positions (limited and sufficient ) are randomly placed within a \(20\,\,\mathrm{m} \times 20\,\,\mathrm{m}\) area centered at the origin, ensuring full target coverage. Targets are also randomly distributed in this domain.

The results confirm that greedy tracking quality remains between the optimal value and its half, consistent with the bound \( q(\text{GREEDY}) \geq \frac{1}{2} q(\text{OPT}) \). As target count increases, the greedy method maintains this performance, highlighting its efficiency and scalability for large-scale scenarios.

\begin{figure}[h!]
\centering
{{\includegraphics[width=9cm, height=5cm]{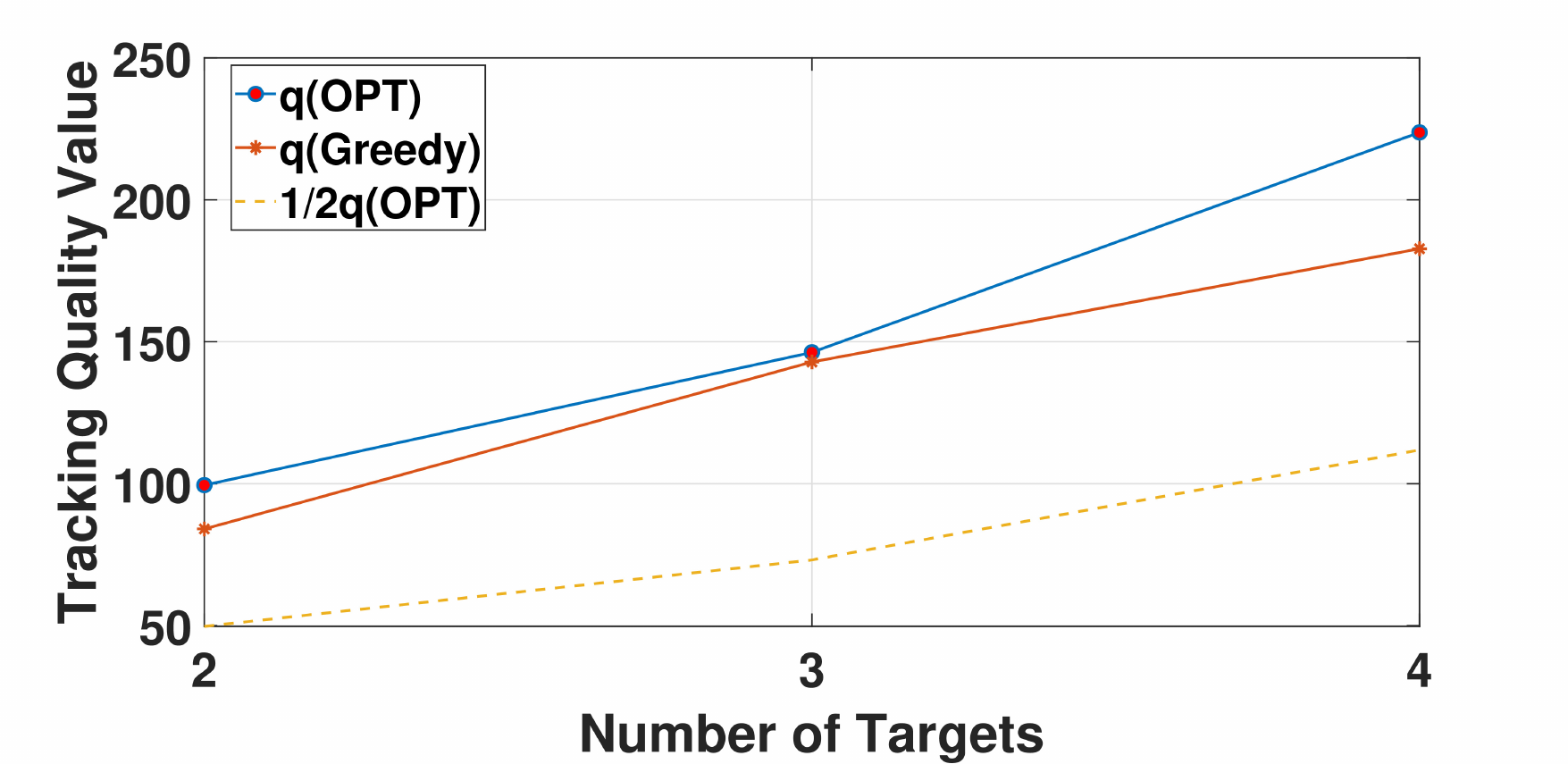}}}
\caption{Comparison of tracking quality values for optimal assignment, greedy performance, and half of the optimal assignment across varying target counts.}\label{fig6}
\end{figure}

\section{CONCLUSIONS}
This paper addresses the multi-robot, multi-target tracking problem with robots of varying sensing capabilities. By leveraging matroid theory, we propose efficient greedy algorithms that dynamically assign robots to targets to maximize tracking quality. The algorithms guarantee constant-factor approximation bounds, \(1/3\) for general tracking functions and \(1/2\) for submodular functions, while maintaining polynomial-time complexity. Simulations confirm their effectiveness in accurate, long-term tracking, demonstrating robustness and scalability for real-world applications such as surveillance and monitoring. Future work will extend the framework to dynamic environments with varying numbers of sensors and targets, further strengthening performance guarantees and robustness in practice.


\end{document}